
\documentclass[letterpaper, 10 pt, conference]{ieeeconf}  
\IEEEoverridecommandlockouts                              

\overrideIEEEmargins                                      

\usepackage{verbatim}
\usepackage{amssymb}
\usepackage{mathtools}
\usepackage{nth}
\usepackage{cite}

\usepackage[vlined,ruled]{algorithm2e}



\usepackage{amsmath,graphicx,epsfig,color,amsfonts, subfigure}

\graphicspath{{./fig/}}

\def\expt{\mathbb{E}}
\def\real{\mathbb{R}}

\def\natural{\mathbb{N}}
\newcommand{\until}[1]{\{1,\dots, #1\}}
\newcommand{\subscr}[2]{#1_{\textup{#2}}}
\newcommand{\supscr}[2]{#1^{\textup{#2}}}
\newcommand{\setdef}[2]{\{#1 \; | \; #2\}}
\newcommand{\seqdef}[2]{\{#1\}_{#2}}

\newcommand{\union}{\operatorname{\cup}}

\newcommand\oprocendsymbol{\hbox{$\square$}}
\newcommand\oprocend{\relax\ifmmode\else\unskip\hfill\fi\oprocendsymbol}

\newcommand\bit[1]{\textit{\textbf{#1}}}

\def \bs {\boldsymbol}

\def \etal {\emph{et al.}}

\newtheorem{theorem}{Theorem}

\newtheorem{remark}{Remark}




\title{On  Abruptly-Changing and Slowly-Varying Multiarmed Bandit Problems
\thanks{This work has been supported by NSF Award IIS-1734272.}
}

\author{Lai Wei \hspace{1in} Vaibhav Srivastava
\thanks{L. Wei and V. Srivastava are with the Department of Electrical and Computer Engineering. Michigan State University, East Lansing, MI 48823 USA.
        {\tt\small e-mail: weilai1@msu.edu; e-mail: vaibhav@egr.msu.edu }}%
}

\begin{document}

\maketitle
\thispagestyle{empty}
\pagestyle{empty}

\begin{abstract}
We study the non-stationary stochastic multiarmed bandit (MAB) problem and propose two generic algorithms, namely, the Limited Memory Deterministic Sequencing of Exploration and Exploitation (LM-DSEE) and the Sliding-Window Upper Confidence Bound\# (SW-UCB\#).
We  rigorously analyze these algorithms in abruptly-changing and slowly-varying environments and characterize their performance.
We show that the expected cumulative regret for these algorithms under either of the environments is upper bounded by sublinear functions of time, i.e., the time average of the regret asymptotically converges to zero. We complement our analytic results with numerical illustrations. 
\end{abstract}

\section{Introduction}

Decision-making in uncertain and non-stationary environments is one of the most fundamental problems across scientific disciplines, including economics, social science, neuroscience, and ecology. These problems often require balancing several decision-making tradeoffs, such as speed-versus-accuracy, robustness-versus-efficiency, and explore-versus-exploit. The MAB problem is a prototypical example of the explore-versus-exploit tradeoff: choosing between the most informative and seemingly the most rewarding alternative.

 In an MAB problem,  a decision-maker sequentially allocates a single resource by repeatedly choosing one among a set of competing alternative arms (options). These problems have been applied in several interesting areas such as robotic foraging and surveillance~\cite{JRK-AK-PT:78,VS-PR-NEL:13, VS-PR-NEL:14}, acoustic relay positioning for underwater communication~\cite{MYC-JL-FSH:13}, and channel allocation in communication networks~\cite{anandkumar2011distributed}. In a standard MAB problem, a stationary environment is considered, however, many application areas are inherently non-stationary. In this paper, we seek to address this gap and  study the MAB problem in two classes of non-stationary environments: (i) abruptly-changing environment and (ii) slowly-varying environment. 
 
The performance of a sequential allocation policy for the MAB problem is characterized in terms of the expected cumulative regret which is defined as the cumulative sum of the difference between the maximum mean reward and the mean reward at the arm selected by the policy at each time. In their seminal work, Lai and Robbins~\cite{TLL-HR:85} established a logarithmic lower bound on the expected cumulative regret incurred by any policy in a stationary MAB problem and proposed an algorithm that achieves this lower bound. Several subsequent works have focused on design of simpler algorithms that achieve the logarithmic lower bound (see~\cite{SB-NCB:12}, and references therein). In a non-stationary environment, achieving logarithmic  expected cumulative regret may not be feasible and the focus is on design of algorithms that achieve sublinear expected cumulative regret, i.e., the time average of the regret asymptotically converges to zero.  

Some classes of non-stationary MAB have been studied in the literature. In~\cite{auer2002nonstochastic}, authors study a non-stochastic MAB problem in which the rewards are deterministic and non-stationary. They study a weaker notion of the regret, wherein the policy generated by the algorithm is compared against the best policy within the policies that select the same arm at each time. In a recent work~\cite{besbes2014optimal}, the algorithms developed in~\cite{auer2002nonstochastic} are adapted to handle a class of non-stationary environments and upper bounds on the standard notion of the regret are derived.  In~\cite{AG-EM:08}, authors study a class of non-stationary MAB problems in which the mean rewards at any arm may switch abruptly at unknown times to some unknown values. They design an upper confidence bound (UCB) based algorithm that relies on estimates of mean rewards from a recent time-window of observations. In~\cite{liu2017change}, authors study the MAB problem in a piecewise-stationary environment. They use active detection algorithms to determine the change-points and restart the UCB algorithm.  Some other examples of non-stationary MAB problems are discussed in~\cite{burtini2015survey, gupta2011thompson,raj2017taming}.


An application area of interest for the MAB problem is robotic search and surveillance in which a robot is routed to collect stochastic rewards~\cite{dimitrova2017robot, VS-FP-FB:11za}. These rewards may correspond to, for example,  likelihood of an anomaly at a spatial location, concentration of a certain type of algae in the ocean, etc. MAB Algorithms have been extended to these problems by introducing block-allocation strategies that seek to  balance the explore-exploit tradeoff using sufficiently small travel time~\cite{RA-MVH-DT:88,PR-VS-NEL:13d}. In~\cite{VS-PR-NEL:14}, authors developed block-allocation strategies for the MAB problem with abruptly-changing reward by extending the algorithm proposed in~\cite{AG-EM:08}. 

While the above algorithms balance the explore-exploit tradeoff while ensuring sufficiently small travel time, they are reactive in the sense that they select only one arm at a time, i.e., they only provide information about the next location to be visited by the robot. 
Certain motion constraints on the robots such as non-holonomicity may make such movements energetically demanding. Therefore, we seek algorithms that have a deterministic and predictable structure that can be leveraged to design trajectories for the robot which can be efficiently traversed even under motion constraints. Towards this end, we focus on DSEE algorithms~\cite{SV-KL-QZ:13,NN-DK-RJ:16,HL-KL-QZ:13}. 

In this paper, we study the MAB problem in abruptly-changing and slowly-varying environments, and develop upper confidence bound type and DSEE type algorithms for these environments. Our assumptions on the environment are similar to those in~\cite{besbes2014optimal} and~\cite{AG-EM:08}, but we focus on alternative algorithms which include algorithms with deterministic structure as discussed above. In particular, we extend the DSEE algorithm to non-stationary environments and develop the LM-DSEE algorithm. We also extend the SW-UCB algorithm, developed and analyzed for abruptly-changing environments in~\cite{AG-EM:08}, to  the SW-UCB\# algorithm for non-stationary environments. 
%
A sliding-window of observations is used in the SW-UCB algorithm and tuning of the fixed width of this sliding-window requires the knowledge of the the horizon length of the problem. In the SW-UCB\# algorithm, we relax this requirement by considering a time-varying length of the sliding-window.

The major contributions of this paper are threefold. First, we develop two novel algorithms: the LM-DSEE and the SW-UCB\# for the non-stationary MAB problem. Second, we analyze the LM-DSEE and the SW-UCB\# algorithms for abruptly-changing and slowly-varying environments and establish upper bounds on the expected cumulative regret. 
Third, we illustrate our analytic results with numerical examples. 

The remainder of the paper is organized as follows. In Section~\ref{sec:background}, we present some background material and formally introduce the non-stationary stochastic MAB problem. We present two novel algorithms, the LM-DSEE and the SW-UCB\# in Section~\ref{sec:algorithms}. In Sections~\ref{sec:LM-DSEE} and \ref{sec:SW-UCB}, we,  respectively, analyze the LM-DSEE and the SW-UCB\# algorithms for abruptly-changing and slowly-varying environments. We illustrate our analytic results with numerical examples in Section~\ref{sec:simulation}, and conclude in Section~\ref{sec:conclusions}.

\section{Background \& Problem Description}\label{sec:background}

In this section, we recall the stochastic MAB problem in a stationary environment, and introduce the stochastic MAB problem in two classes of  non-stationary environments, namely, the abruptly-changing environment and the slowly-varying environment.


\subsection{The stationary stochastic MAB problem}
Consider an $N$-armed bandit problem, i.e., an MAB problem with $N$ arms. 
The reward associated with arm $j\in \until{N}$ is a random variable with bounded support $[0,1]$ and an unknown stationary mean $\mu_j \in [0,1]$. Let the decision-making agent choose arm $j_t$ at time $t \in \until{T}$ and receive a reward $r_t$ associated with the arm.  The decision-maker's objective  is to choose a sequence of arms $\seqdef{j_t}{t\in \until{T}}$ that maximizes the expected cumulative reward $\sum_{t=1}^T \mu_{j_t}$, where $T$ is the horizon length of the sequential allocation process.

For an MAB problem, the expected \emph{regret} at time $t$ is defined by $\mu_{j^*}-\mu_{j_t}$, where $\mu_{j^*} = \max \setdef{\mu_j}{j\in\until{N}}$. The objective of the decision-maker can be equivalently defined as minimizing the expected cumulative regret defined by $ R(T) = \sum_{t=1}^T \expt[\mu_{j^*}-\mu_{j_t}]= \sum_{j=1}^N \Delta_j \expt[n_{j}(T)]$, 
where $n_{j}(T)$ is the cumulative number of times a suboptimal arm $j$ has been chosen until time $T$ and $\Delta_j = \mu_{j^*}-\mu_j$ is the expected regret due to picking arm $j$ instead of arm $j^*$.

\subsection{Algorithms for the stationary stochastic MAB problem}
We recall two state-of-the-art algorithms for the stationary stochastic MAB problem relevant to this paper: (i) the UCB algorithm, and (ii) the DSEE algorithm. 

The UCB algorithm maintains a statistical estimate of the mean rewards associated with each arm. It initializes by selecting each arm once and subsequently selects the arm $j_t$ at time $t$ defined by 
\[
j_t \in \text{arg max} \Big\{\bar r_j (t-1) + \sqrt{\frac{2 \ln (t-1) }{n_j(t-1)}}\Big| j \in \until{N} \Big\}, 
\]
where $\bar r_j (t-1)$ is the statistical mean of the rewards received at arm $j$ until time $t$. 
Auer~\etal \ \cite{PA-NCB-PF:02} showed that the UCB algorithm achieves expected cumulative regret that is within a constant factor of the optimal. 

 The DSEE algorithm divides the set of natural numbers $\natural$ into interleaving epochs of exploration and exploitation~\cite{SV-KL-QZ:13}. In the exploration block each arm is played in a round-robin fashion, while in the exploitation block, only the arm with the maximum statistical mean reward is played. 
%
%
For an appropriately defined $w \in \real_{>0}$, the DSEE algorithm at time $t$ exploits if number of exploration steps until time $t-1$ are greater than or equal to $N \lceil w \log t \rceil$, otherwise it starts a new exploration block. Vakili~\etal \ \cite{SV-KL-QZ:13} derived bounds on the regret of the DSEE algorithm.

\subsection{The non-stationary stochastic MAB problem}

The non-stationary stochastic MAB problem is the stochastic MAB problem in which the mean reward at each arm is changing with time. Let the mean reward associated with arm $j$ at time $t$ be $\mu_j(t) \in [0,1]$. The decision-maker's objective  is to choose a sequence of arms $\seqdef{j_t}{t\in \until{T}}$ that maximizes the expected cumulative reward $\sum_{t=1}^T \mu_{j_t}(t)$, where $T$ is the horizon length of the sequential allocation process. We will characterize the performance of algorithms for these problems using the notion of the expected cumulative regret defined by
\begin{align*}
R(T) &= \sum_{t=1}^T \expt[\mu_{j_t^*}(t) - \mu_{j_t}(t)] \\
&=\sum_{t=1}^T \mu_{j_t^*}(t) - \expt \Big[\sum_{j=1}^{N}\sum_{t=1}^{T}\bs{1}_{\{ j_t =j \}}\mu_j(t) \Big],
\end{align*}
where $\mu_{j_t^*}(t)=\max_{j \in \until{N}} \mu_j(t)$, $\bs{1}_{\{ \centerdot \}}$ is the indicator function and the expectation is computed over different realizations of $j_t$. For brevity, in the following, we will refer to $R(T)$ simply as the regret.

In this paper, we study the above MAB problem for two classes of non-stationary environments: 

\noindent
\bit{Abruptly-changing environment:} 
In an abruptly-changing environment, the mean rewards from arms switch to unknown values at unknown time-instants. We refer to these time-instants as \emph{breakpoints}. We assume that the number of breakpoints until time $T$ is $\Upsilon_T \in O(T^\nu)$, where $\nu \in [0,1)$ and is known a priori. 

\noindent
\bit{Slowly-varying environment:}
In a slowly-varying environment, the change in the mean reward at each arm between any two subsequent time-instants is small and is upper bounded by $\epsilon_T \in O(T^{-\kappa})$, where $\kappa \in \real_{>0}$ and is known a priori. Here, lower values of $\kappa$ correspond to higher changes in the mean reward at subsequent time-instants. In the following, we will refer to $\epsilon_T$ as the non-stationarity parameter.

\section{Algorithms for Non-Stationary Stochastic MAB Problem}\label{sec:algorithms}
In this section, we present two algorithms for the non-stationary stochastic MAB problem: the Limited-Memory DSEE (LM-DSEE) algorithm and the Sliding-Window UCB\# (SW-UCB\#) algorithm.
These algorithms are generic and require some parameters to be tuned based on environment characteristics.  

\subsection{The LM-DSEE algorithm} 
The LM-DSEE algorithm comprises interleaving blocks of exploration and exploitation. In the $k$-th exploration epoch, each arm is sampled $L(k)=\lceil \gamma \ln (k^\rho l b) \rceil$ number of times.  In the $k$-th exploitation epoch, the arm with the highest sample mean in the $k$-th exploration epoch is sampled $\lceil a k^\rho l \rceil -NL(k)$ times. Here, parameters $\rho, \gamma,a,b$ and $l$ are tuned based on the environment characteristics (see Algorithm~\ref{algo:lm-dsee} for details). In the following, we will set $a$ and $b$ to unity for the purposes of analysis. Parameters $a$ and $b$ do not influence the order of regret bounds derived below, but they can be tuned to enhance the transient performance.

The LM-DSEE algorithm is similar in spirit to the DSEE algorithms~\cite{SV-KL-QZ:13,NN-DK-RJ:16}, wherein the length of exploitation epoch increases exponentially with the epoch number and all the data collected in the previous exploration epochs is used to estimate the mean rewards. However, in a non-stationary environment, using all the rewards from the previous exploration epochs may lead to a heavily biased estimate of the mean rewards. Furthermore, an exponentially increasing exploitation epoch length may lead to excessive exploitation based on an outdated estimate of the mean rewards. To address these issues, we modify the DSEE algorithm by using only the rewards from the current exploration epoch to estimate the mean rewards, and we increase the length of exploitation epoch using a power law.

\IncMargin{.3em}
\begin{algorithm}[t]
  {\footnotesize
   \SetKwInOut{Input}{  Input}
   \SetKwInOut{Set}{  Set}
   \SetKwInOut{Title}{Algorithm}
   \SetKwInOut{Require}{Require}
   \SetKwInOut{Output}{Output}
   
{\it For abruptly-changing environment} \\   
   
   \Input{$\nu \in [0,1)$,\! $\Delta_{\min} \! \in (0,1)$, $T \in \natural$,\! $a \in \real_{>0}$,\! $b \in (0,1]$\;}
   
    \Set{$\gamma \geq \frac{2}{\Delta_{\min}^2}$,
    $l\in \{\frac{N}{a} \lceil \gamma \ln l b \rceil, \ldots, +\infty\} $, and $\rho=\frac{1-\nu}{1+\nu} $\;}
    
    \smallskip

 {\it For slowly-varying environment} \\       
   
    \Input{$\kappa \in \real_{>0}$, $\kappa_{\max} \in (0,\frac{4}{3})$,\! $T \in \natural$,\! $a \in \real_{>0}$, $b \in (0,1]$;}
    
      \Set{$ \tilde \kappa \leftarrow \min\{\kappa, \kappa_{\max}\}, \rho \leftarrow \frac{3 \tilde \kappa}{4-3 \tilde \kappa},$
      \smallskip
      $l \in \{\frac{N}{a} \lceil l^\frac{2}{3} \ln l b \rceil, \ldots, +\infty\}$,   
      and  $\gamma =2(k^\rho l)^{\frac{2}{3}};$}
  
  \medskip  
      
   \Output{sequence of arm selection\;}

   \medskip
   
\emph{\% Initialization:} 
\smallskip

   \nl Set batch index $k \leftarrow 1$ and $t \leftarrow 1$\;

   \smallskip
   
   \nl \While{$t \leq T$}{
   \smallskip
   
   \emph{\% Exploration}
   
   \smallskip
   
   \nl  \For{$j \in \until{N}$ \smallskip}
   {Pick arm $j$,  $L(k) \leftarrow \lceil \gamma \ln (k^\rho l b) \rceil$  times \; \smallskip
   collect rewards $\seqdef{\hat r_j^i(k)}{i \in \until{L(k)}}$ \;\smallskip
  compute sample mean $\supscr{\bar r}{epch}_j(k) \leftarrow \frac{1}{L(k)}\sum_{i=1}^{L(k)} \hat r_j^i(k)$\;  
   }
\smallskip

   \emph{\% Exploitation}
   \smallskip
   
\nl Select the best arm   $\supscr{j}{epch}_k=\arg \max_{j \in \until{N}}\supscr{\bar r}{epch}_j(k)$ \; 

\smallskip

\nl Pick arm $\supscr{j}{epch}_k$, $\lceil a k^\rho l \rceil -NL(k)$ times \; 
\smallskip

\nl Update batch index $k \leftarrow k+1$ and $t \leftarrow t+ \lceil ak^\rho l \rceil$
   }


    \caption{\textit{The LM-DSEE Algorithm}}
  \label{algo:lm-dsee}}
\end{algorithm} 
\DecMargin{.3em}

\subsection{The SW-UCB\# algorithm} 
The SW-UCB\# algorithm is an adaptation of the SW-UCB algorithm proposed and studied in~\cite{AG-EM:08}. The SW-UCB\# algorithm, at time $t$, maintains an estimate of the mean reward $\bar{r}_j(t,\alpha)$ at each arm $j$, using only the rewards collected within a sliding-window of observations. 
Let the width of the sliding-window at time $t \in \until{T}$ be $\tau(t, \alpha) = \min\{\lceil \lambda t^\alpha \rceil, t\}$, where parameters $\alpha \in (0,1]$ and $\lambda \in \real_{\ge 0} \union \{+ \infty\}$ are tuned based on environment characteristics. 
Let $n_j(t,\alpha)=\sum_{s= t- \tau(t,\alpha)+1}^{t} \bs{1}_{\{ j_s=j \}}$ be the number of times arm $j$ has been selected in the time-window at time $t$, then 
\[
\overline{r}_j(t, \alpha)=\frac{1}{n_j(t, \alpha)} \sum_{s= t- \tau (t, \alpha)+1}^{t} r_j(s) \bs{1}_{\{ j_s=j \}}.
\]

Based on the above estimate, the SW-UCB algorithm at each time selects the arm 
\begin{equation}
j_t = \text{arg max} \setdef{\overline{r}_j(t-1, \alpha) + c_j(t-1, \alpha)}{j \in \until{N}},\label{ucb calculation}
\end{equation}
where  $c_j(t, \alpha)=  \sqrt{\frac{(1+\alpha) \ln t}{n_j(t, \alpha)}}$.
The details of the algorithm are presented in Algorithm 2.

In contrast to the SW-UCB algorithm~\cite{AG-EM:08}, the SW-UCB\# algorithm employs a time-varying width of the sliding-window. The tuning of the fixed window width in~\cite{AG-EM:08} requires a priori knowledge of the time horizon $T$ which is no longer needed for the SW-UCB\# algorithm. 

\IncMargin{.3em}
\begin{algorithm}[t]
	{\footnotesize
		\SetKwInOut{Input}{  Input}
		\SetKwInOut{Set}{  Set}
		\SetKwInOut{Title}{Algorithm}
		\SetKwInOut{Require}{Require}
		\SetKwInOut{Output}{Output}
		
		{\it For abruptly-changing environment} \\   
		
		\Input{$\nu \in [0,1)$, $\Delta_{\min} \in (0,1)$, $\lambda \in \real_{>0}$ \& $T\in \natural$\;}
		
		\Set{$\alpha = \frac{1-\nu}{2}$}
		\smallskip
		
		{\it For slowly-varying environment} \\     
		 
		\Input{$\kappa \in \real_{>0}$, $\lambda \in \real_{>0}$ \& $T\in \natural$\;}
		\Set {$\alpha= \min\{1,\frac{3\kappa}{4}\}$}
		
		\medskip		
%
		
		\Output{sequence of arm selection\;}
		
		\medskip
		
		\emph{\% Initialization:} 
		\smallskip
		
		 \nl \While{$t \leq T$}{
			\smallskip
				
			\smallskip
			
			\nl  \If{$t \in \until{N}$ \smallskip}
			{Pick arm $j_t=t$;
			}
		
			\smallskip
						
			\smallskip
			
			\nl  \Else
			{Pick arm $j_t$ defined in \eqref{ucb calculation} \;
			}
			\smallskip					
		}

		
		\caption{\textit{The SW-UCB\# Algorithm}}
		\label{algo:sw-ucb}}
\end{algorithm} 
\DecMargin{.3em}

%
%
%
%

\section{Analysis of the LM-DSEE algorithm} \label{sec:LM-DSEE}

In this section, we analyze the performance of the LM-DSEE algorithm (Algorithm~\ref{algo:lm-dsee}) in abruptly-changing and slowly-varying environments. 

\subsection{LM-DSEE in the abruptly-changing environment}
Before we analyze the LM-DSEE algorithm in the abruptly-changing environment, we introduce the following notations. Let
\begin{align*}
\Delta_j &=\max \setdef{\mu_{j^*_t}(t) - \mu_j(t)}{t \in \until{T}},\\
\Delta_{\max} &=  \max \setdef{\Delta_j}{j \in \until{N}},\\
\text{and }\Delta_{\min} &=\min  \{ \mu_{j^*_t}(t)-\mu_j(t) \;|\; \\
& \qquad \qquad t \in \until{T}, j \in \until{N}  \setminus \{j^*_t\} \}.
\end{align*}

\begin{theorem}[\bit{LM-DSEE in  abruptly-changing environment}]\label{thm:LMDSEE-abrupt}
For the abruptly-changing environment with the number of breakpoints $\Upsilon_T \in O(T^\nu)$, $\nu \in [0,1)$ and the LM-DSEE algorithm, the expected cumulative regret 
\[
R^{\text{LM-DSEE}}(T)\in O(T^{\frac{1+\nu}{2}}\ln T).
\]  
\end{theorem}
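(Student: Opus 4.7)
The plan is to partition the horizon into the algorithm's epochs and bound the regret contribution of each. Since epoch $k$ has total length $\lceil a k^\rho l \rceil = \Theta(k^\rho)$, the number of epochs $K$ executed by time $T$ satisfies $\sum_{k=1}^K k^\rho = \Theta(K^{1+\rho})$, and with the choice $\rho = (1-\nu)/(1+\nu)$, i.e., $1+\rho = 2/(1+\nu)$, this gives $K = O(T^{(1+\nu)/2})$ and correspondingly $K^\rho = O(T^{(1-\nu)/2})$. These scalings drive all the subsequent estimates.

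Next I would decompose the per-epoch regret into three pieces: (a) the exploration regret, upper bounded by $N L(k)\Delta_{\max}$; (b) the exploitation regret contributed by \emph{good} epochs, defined as those containing no breakpoint; and (c) the total regret contributed by \emph{bad} epochs, defined as those containing at least one breakpoint. Summing (a) over all $K$ epochs yields $O\bigl(\sum_{k=1}^K \ln(k^\rho l)\bigr) = O(K \ln K) = O(T^{(1+\nu)/2} \ln T)$, which will turn out to be the dominant contribution.

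For (b), inside a good epoch the mean rewards are stationary, so $\supscr{\bar r}{epch}_j(k)$ is an unbiased estimator built from $L(k)$ i.i.d.\ samples. A Hoeffding inequality combined with a union bound over the $N$ arms, together with the calibration $\gamma \geq 2/\Delta_{\min}^2$ and $L(k) \geq \gamma \ln(k^\rho l)$, gives $\prob(\supscr{j}{epch}_k \neq j^*_t) \leq 2(N-1)(k^\rho l)^{-1}$ throughout the exploitation phase. Multiplying by the exploitation length $\lceil a k^\rho l \rceil \Delta_{\max}$ and summing over $k \in \until{K}$ produces a contribution of $O(K) = O(T^{(1+\nu)/2})$. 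For (c), each bad epoch contributes trivially at most $\lceil a k^\rho l \rceil \Delta_{\max} = O(K^\rho)$ of regret, and since there are at most $\Upsilon_T = O(T^\nu)$ such epochs, their total contribution is $O(\Upsilon_T \cdot K^\rho) = O\bigl(T^\nu \cdot T^{(1-\nu)/2}\bigr) = O(T^{(1+\nu)/2})$. Adding the three pieces yields the claimed bound.

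The main subtlety is the bookkeeping around breakpoints: a breakpoint that falls inside the exploration phase of an epoch can bias the sample means and cause the wrong arm to be exploited, while a breakpoint inside the exploitation phase can turn a previously-optimal arm suboptimal mid-epoch. Both situations are handled uniformly by declaring any epoch containing a breakpoint \emph{bad} and applying the crude epoch-length bound to it. What makes the argument close is that the exponent $\rho = (1-\nu)/(1+\nu)$ is precisely the balance point at which the good-epoch, bad-epoch, and exploration contributions all match at order $T^{(1+\nu)/2}$ (up to the $\ln T$ factor in the exploration piece); any other choice of $\rho$ would inflate one of the three terms.
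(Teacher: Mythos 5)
Your proposal is correct and follows essentially the same route as the paper's proof: the same three-way decomposition into exploration regret, exploitation regret in breakpoint-free epochs (controlled via Chernoff--Hoeffding with $\gamma \geq 2/\Delta_{\min}^2$ yielding a misidentification probability of order $(k^\rho l)^{-1}$), and a crude length bound on the at most $\Upsilon_T$ epochs containing breakpoints, together with the count $K \in O(T^{1/(1+\rho)}) = O(T^{(1+\nu)/2})$. The only cosmetic difference is that you state the union bound over arms explicitly and emphasize the exponent-balancing role of $\rho=(1-\nu)/(1+\nu)$, which the paper leaves implicit.
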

\medskip

\begin{proof}
Let $K$ be the index of the epoch containing the time-instant $T$, then 
the length of each epoch  is at most $\lceil {K}^\rho l \rceil$.  
Since breakpoints are located in at most $\Upsilon_T$ epochs, we can upper bound the regret from epochs containing breakpoints by
$$R_b \leq \Upsilon_T \lceil {K}^\rho l \rceil \Delta_{\max}.$$

In the epochs containing no breakpoint, let $R_e$ and $R_i$ denote, respectively, the regret from exploration and exploitation epochs.  Note that in such epochs the mean reward from each arm does not change. We denote the best arm in the $k$-th epoch with no breakpoint by $\subscr{j}{no-break}^*(k)$ and its mean by $\subscr{\mu}{no-break}^*(k)$.


Then, the regret in exploration epochs $R_e$ satisfies,
\[
R_e \leq \sum_{k=1}^{K} \sum_{j=1}^{N} \lceil \gamma \ln (k^\rho l) \rceil \Delta_j \leq K\lceil \gamma \ln (K^\rho l) \rceil \sum_{j=1}^{N} \Delta_j.
\]
In exploitation epochs, regret is incurred if the best arm is not selected, and consequently the regret in exploitation epochs $R_i$ satisfies
\begin{equation}\label{ab_dsee_ri}
 R_i \leq \sum_{k=1}^{K}\sum_{j=1}^{N} \big[ \lceil  k^\rho l \rceil-NL(k) \big]  \mathbb{P}(\supscr{j}{epch}_k = j \neq \subscr{j}{no-break}^*(k)) \Delta_j,
\end{equation}
where $\supscr{j}{epch}_k$ is the arm selected in the $k$-th exploitation epoch and $L(k)$ is the number of times an arm is selected in the $k$-th exploration epoch.

It follows from the Chernoff-Hoeffding inequality~\cite[Theorem 1]{WH:63} that 
\begin{align*}
\mathbb{P}(\supscr{\overline{r}}{epch}_j(k) &\geq \supscr{\mu}{epch}_j(k)+\delta)\\
&=\mathbb{P}(\supscr{\overline{r}}{epch}_j(k) \le \supscr{\mu}{epch}_j(k)-\delta) \leq \exp (-2\delta^2 L(k)),
\end{align*}
where $\supscr{\mu}{epch}_j(k)$ is the mean reward of arm $j$ in the $k$-th epoch. Thus, we have
\begin{equation*}
 \begin{split}
&\mathbb{P}\big(\supscr{j}{epch}_k=j \neq  \subscr{j}{no-break}^*(k)\big) \\
& \leq \mathbb{P}\big(\supscr{\overline{r}}{epch}_j(k) \geq \supscr{\mu}{epch}_j(k) +\frac{\Delta_{\min}}{2}\big)\\
&+\mathbb{P}\big(\supscr{\overline{r}}{epch}_{\subscr{j}{no-break}^*} (k) \leq 
\subscr{\mu}{no-break}^*(k)-\frac{\Delta_{\min}}{2}\big) \\
&\leq 2 \exp\big(-\frac{\Delta_{\min}^2}{2} \gamma \ln (k^\rho l) \big).
 \end{split}
\end{equation*}
Since $\gamma \geq \frac{2}{\Delta_{\min}^2}$, $\mathbb{P}(\supscr{j}{epch}_k=j \neq \subscr{j}{no-break}^*(k)) \leq 2(k^\rho l)^{-1}$.
 Substituting it into (\ref{ab_dsee_ri}), we have $R_i \leq 2K\sum_{j=1}^{N} \Delta_j$ since $\big[ \lceil  k^\rho l \rceil-NL(k) \big]< k^\rho l$. Furthermore, using the fact that $\sum_{k=1}^{K}k^\rho$ can be upper-bounded and lower-bounded by areas under the curves $f(k)=k^\rho$ and $f(k)=(k+1)^\rho$ respectively, we have
\[
\frac{l}{1+\rho}(K-1)^{1+\rho}-K\leq T \leq \frac{l}{1+\rho}(K+1)^{1+\rho}+K,
\] 
and consequently $K \in O(T^{\frac{1}{1+\rho}})$.
Therefore, it follows that
\begin{align*}
R^{\text{LM-DSEE}}(T) &=R_b+R_e+R_i \\
&\leq \Upsilon_T{K}^\rho l\Delta_{\max}
+K(\lceil \gamma \ln (K^\rho l) \rceil +2) \sum_{j=1}^{N} \Delta_j.
\end{align*}
Thus, the regret $R^{\text{LM-DSEE}}(T) \in O(T^{\frac{1+\nu}{2}}\ln T)$, and this establishes the theorem.
\end{proof}

\subsection{LM-DSEE in the slowly-varying environment}

\begin{theorem}[\bit{LM-DSEE in slowly-varying environment}]\label{thm:LM-DSEE-slow}
 For the slowly-varying environment with the non-stationarity parameter $\epsilon_T=O(T^{-\kappa})$, $\kappa \in \real_{>0}$ and the  LM-DSEE algorithm, the 
expected cumulative regret 
\[
\supscr{R}{LM-DSEE}(T)\in O(T^{\frac{3+2\rho}{3+3\rho}}\ln T),
\]
where $\rho =\frac{3 \tilde \kappa}{4-3 \tilde \kappa}$, $\tilde \kappa = \min\{\kappa, \kappa_{\max}\}$, and $\kappa_{\max} \in (0,\frac{4}{3})$.
\end{theorem}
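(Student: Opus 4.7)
The plan is to mirror the structure of the proof of Theorem~\ref{thm:LMDSEE-abrupt}, but to replace the ``breakpoint-epoch'' isolation by a uniform within-epoch drift bound. I would let $K$ denote the index of the epoch containing time $T$, and for each epoch $k$ and arm $j$ introduce $\supscr{\mu}{epch}_j(k)$, the average of $\mu_j(\cdot)$ over the $L(k)$ sampling instants at which arm $j$ is played during exploration of epoch $k$; then $\expt[\supscr{\bar r}{epch}_j(k)] = \supscr{\mu}{epch}_j(k)$, and since epoch $k$ has length at most $\lceil k^\rho l\rceil$, slow variation gives $|\mu_j(t) - \supscr{\mu}{epch}_j(k)| \leq \lceil k^\rho l\rceil \epsilon_T$ for every $t$ in epoch $k$. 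I would then decompose the regret as $R = R_e + R_i$ into its exploration and exploitation parts.

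For $R_e$, straightforward summation with $L(k) \leq \lceil 2(k^\rho l)^{2/3} \ln(k^\rho l b)\rceil$ will give $R_e \in O(K^{1+2\rho/3} l^{2/3} \ln K)$. For $R_i$, I would define the epoch-$k$ good event as $|\supscr{\bar r}{epch}_j(k) - \supscr{\mu}{epch}_j(k)| < (k^\rho l)^{-1/3}$ for every arm $j$. Applying Chernoff-Hoeffding with $\gamma = 2(k^\rho l)^{2/3}$ and a union bound over the $N$ arms shows that the good event fails with probability at most $2N(k^\rho l)^{-4}$. On the good event, the arm $\supscr{j}{epch}_k$ chosen for exploitation satisfies $\supscr{\mu}{epch}_{\supscr{j}{epch}_k}(k) \geq \max_j \supscr{\mu}{epch}_j(k) - 2(k^\rho l)^{-1/3}$; combining this with the drift bound yields a per-step exploitation regret bounded by $2(k^\rho l)^{-1/3} + 2\lceil k^\rho l\rceil \epsilon_T$. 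Consequently the exploitation regret from epoch $k$ is at most $2(k^\rho l)^{2/3} + 2(k^\rho l)^{2}\epsilon_T + 2N\Delta_{\max}(k^\rho l)^{-3}$.

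Summing over $k \in \until{K}$ and invoking the same $T = \Theta(K^{1+\rho}l)$ identity derived in the proof of Theorem~\ref{thm:LMDSEE-abrupt}, the three summands convert to $O(T^{(3+2\rho)/(3+3\rho)})$, $O(T^{(1+2\rho)/(1+\rho)-\kappa})$, and a constant, respectively. A short algebraic check using $\rho = 3\tilde\kappa/(4-3\tilde\kappa)$ and $\tilde\kappa \leq \kappa$ verifies that $(1+2\rho)/(1+\rho) - \kappa \leq (4-\tilde\kappa)/4 = (3+2\rho)/(3+3\rho)$, so the drift term is dominated by the sampling-noise term. Combining this with $R_e$ and absorbing the logarithmic factor yields the claimed $\supscr{R}{LM-DSEE}(T) \in O(T^{(3+2\rho)/(3+3\rho)}\ln T)$.

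The main obstacle is choosing the deviation threshold $(k^\rho l)^{-1/3}$ in the Chernoff step: it must be small enough that the resulting exploitation gap, together with the cumulative drift $\lceil k^\rho l\rceil \epsilon_T$, is absorbed by what exploration already pays, yet large enough that the exploration length $L(k) = \Theta((k^\rho l)^{2/3} \ln(k^\rho l))$ still drives the misidentification probability to a harmless level. The exponent $\rho = 3\tilde\kappa/(4-3\tilde\kappa)$ is precisely the value that equalizes the sampling-noise exponent $(3+2\rho)/(3+3\rho)$ and the drift-induced exponent $(1+2\rho)/(1+\rho) - \kappa$, and the truncation $\tilde\kappa = \min\{\kappa, \kappa_{\max}\}$ with $\kappa_{\max} < 4/3$ keeps $\rho$ (and hence the per-epoch lengths) bounded when the environment is very nearly stationary.
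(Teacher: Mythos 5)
Your proposal is correct and follows essentially the same route as the paper's proof: the same split into $R_e$ and $R_i$, the same epoch-averaged means (your $\supscr{\mu}{epch}_j(k)$ is the paper's $M_j(k)$), the same within-epoch drift bound $\varrho = \epsilon_T k^\rho l$, and the same Chernoff--Hoeffding deviation scale $(k^\rho l)^{-1/3}$ tied to $\gamma = 2(k^\rho l)^{2/3}$, yielding the same three summands and the same balancing algebra for $\rho = 3\tilde\kappa/(4-3\tilde\kappa)$. The only cosmetic difference is that you condition on a per-epoch uniform concentration event with a union bound over arms, whereas the paper argues per time step through the near-optimal set $J(t)$ with threshold $\sigma = (k^\rho l)^{-1/3} + 2\varrho$; both give the same orders.
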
 
\begin{proof}
Similar to the proof of Theorem~\ref{thm:LMDSEE-abrupt}, we divide the regret into $R_e$ and $R_i$, the regret in the exploration epoch and the exploitation epoch, respectively. It follows that
\begin{equation*}
 \begin{split}
R_e &\leq \sum_{j=1}^{N} \sum_{k=1}^{K} \lceil \gamma \ln (k^\rho l) \rceil \Delta_j \\
&\le \sum_{j=1}^{N} \Delta_j \sum_{k=1}^{K} \big[ 2(k^\rho l)^{\frac{2}{3}}\ln (K^\rho l)+1 \big]\\
& \leq \Big[\frac{2 l^{ \frac{2}{3}}}{\frac{2}{3}\rho+1}(K+1)^{\frac{2}{3}\rho +1}\ln (K^\rho l)+K \Big]  \sum_{j=1}^{N} \Delta_j.
 \end{split}
\end{equation*}

Also, for the regret in the exploitation epoch, we have
\begin{equation}\label{eq:regret-exploit-lmdsee2}
R_i \leq \sum_{j=1}^{N}\sum_{k=1}^{K} \sum_{t \in \text{epoch } k} \!\!\!\!\!
\mathbb{P}(\supscr{j}{epch}_k=j \neq j^*_t) \big(\mu_{j^*_t}(t)-\mu_{\supscr{j}{epch}_k}(t) \big).
\end{equation}

In the context of slowly-varying environment, when the best arm switches, there exists a period around the switching instant during which the difference in the mean rewards between the best arm and the next-best arm is extremely small. Such a situation needs to be handled carefully, and towards this end we define 
\[
J(t) =\setdef{j \in \until{N}}{\mu_{j^*_t}(t) - \mu_j(t) \le \sigma}, 
\]
where we set $\sigma=(k^\rho l)^{-\frac{1}{3}}+2\varrho$ and $\varrho = \epsilon_T k^\rho l $, which is the maximum change in the mean reward at any arm in the $k$-th epoch. Then, it follows that 
\begin{align*}
\mathbb{P}(\supscr{j}{epch}_k =j \neq j_t^*)
&=\mathbb{P}(\supscr{j}{epch}_k=j \neq j_t^*, j \in J(t)) \\
&+  \mathbb{P}(\supscr{j}{epch}_k=j \neq j_t^*, j \notin J(t)).
\end{align*}
Substituting it into~\eqref{eq:regret-exploit-lmdsee2}, we obtain
\begin{align*}
 &R_i \leq \sum_{j=1}^{N}\sum_{k=1}^{K} \sum_{t \in \text{epoch } k} \big[\mathbb{P}\big(\supscr{j}{epch}_k=j, j \notin J(t)\big)\Delta_j +\sigma\big].
\end{align*}
Denote $\chi_{j,k}$ as the set of time indices at which the arm $j$ is sampled in the 
$k$-th exploration epoch. Define
$$M_j(k) \triangleq \frac{1}{|\chi_{j,k}|} \sum_{t\in \chi_{j,k}} \mu_j(t).$$
Then, it can be shown that 
\[
 \mu_{j^*_t}(t) - \mu_j(t) > \sigma \implies M_{j^*_t}(k) - M_j (k) > \sigma - 2 \varrho , 
\]
for all $t \in \text{epoch }k$. Consequently, 
 \begin{align*}
&\mathbb{P}(\supscr{j}{epch}_k=j, j\notin J(t)) \\
&\leq \mathbb{P}(\supscr{\overline{r}}{epch}_j(k)>\supscr{\overline{r}}{epch}_{j_t^*}(k) , \mu_{j^*_t}(t) - \mu_j(t) > \sigma)\\
&\leq \mathbb{P}(\supscr{\overline{r}}{epch}_j(k)>\supscr{\overline{r}}{epch}_{j_t^*}(k) , M_{j^*_t}(k) - M_j (k) > \sigma - 2 \varrho)\\
&\leq \mathbb{P} (\supscr{\overline{r}}{epch}_j(k) \geq M_j(k)+\frac{\sigma-2\varrho}{2})\\
&\quad+\mathbb{P}(\supscr{\overline{r}}{epch}_{j_t^*}(k) \leq M_{j_t^*} (k)-\frac{\sigma-2\varrho}{2}).
 \end{align*}
Since $\gamma =2(k^\rho l)^{\frac{2}{3}}$, it follows from Chernoff-Hoeffding inequality~\cite[Theorem 1]{WH:63} that
\begin{align*}
&\mathbb{P}(\supscr{j}{epch}_k=j,j\notin J(t))\\
&\leq 2 \exp(-\frac{(\sigma-2\varrho)^2}{2} \gamma \ln (k^\rho l)) \leq 2(k^\rho l)^{-1},
\end{align*}
and consequently, 
\begin{equation*}
 \begin{split}
&R_i \leq \sum_{j=1}^{N}\sum_{k=1}^{K} \big[2(k^\rho l)^{-1}\Delta_j + (k^\rho l)^{-\frac{1}{3}}+2\varrho\big]\big[ \lceil  k^\rho l \rceil-NL(k) \big] \\
&\leq \frac{N l^{\frac{2}{3}}}{\frac{2}{3}\rho+1}(K+1)^{\frac{2}{3}\rho +1}+2K \sum_{j=1}^{N}\Delta_j+\frac{2Nl^2\epsilon_T}{1+2\rho}(K+1)^{1+2\rho}.
\end{split}
\end{equation*}
Using the fact that $K \in O(T^{\frac{1}{1+\rho}})$, we have  $R_i \in O(T^{\frac{3+2\rho}{3+3\rho}})$ and $R_e \in O(T^{\frac{3+2\rho}{3+3\rho}}\ln T )$.  
Thus,  $R^{\text{LM-DSEE}}(T) \in O(T^{\frac{3+2\rho}{3+3\rho}}\ln T)$, and this concludes the proof.  
\end{proof}

\begin{remark}[\bit{The choice of $\kappa_{\max}$ in Theorem~\ref{thm:LM-DSEE-slow}}]
Theorem \ref{thm:LM-DSEE-slow} shows that the performance of the LM-DSEE improves with increasing $\rho$ and consequently, the value of $\rho$ should be chosen as high as possible. Even though the asymptotic performance of the algorithm is better at higher values of $\rho$, its finite time performance may suffer. To address this issue, we saturate the value of $\rho$ by saturating the value of $\kappa$ at $\kappa_{\max}$. The value of $\kappa_{\max}$ is a tunable parameter that dictates the tradeoff between the asymptotic and the finite time performance. \oprocend
\end{remark}

\section{Analysis of the SW-UCB\# algorithm} \label{sec:SW-UCB}
In this section, we analyze the performance of the SW-UCB\# algorithm (Algorithm~\ref{algo:sw-ucb}) in abruptly-changing and slowly-varying environments.

\subsection{SW-UCB\# in the abruptly-changing environment}

%
%
%

\begin{theorem}[\bit{SW-UCB\# in abruptly-changing environment}]\label{thm:sw-ucb-abrupt}
For the abruptly-changing environment with the number of breakpoints $\Upsilon_T=O(T^\nu)$, $\nu \in [0,1)$ and the SW-UCB\# algorithm,  the expected cumulative regret
\[
\supscr{R}{SW-UCB\#}(T) \in O(T^{\frac{1+\nu}{2}}\ln T).
\]
\end{theorem}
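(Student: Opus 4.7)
The plan is to adapt the fixed-window SW-UCB analysis of Garivier and Moulines~\cite{AG-EM:08} to the time-varying window $\tau(t,\alpha)=\min\{\lceil\lambda t^\alpha\rceil,t\}$ with $\alpha=(1-\nu)/2$, and then balance the two resulting error sources by choice of $\alpha$. First I would partition $\until{T}$ into \emph{contaminated} times, for which the window $[t-\tau(t,\alpha)+1,t]$ contains at least one breakpoint, and \emph{clean} times otherwise. Each of the $\Upsilon_T\in O(T^\nu)$ breakpoints can contaminate at most $\tau(T,\alpha)=O(T^\alpha)$ successive time indices, so the contaminated regret contribution is bounded by $R_c \le \Upsilon_T\,\tau(T,\alpha)\,\Delta_{\max} \in O(T^{\nu+\alpha})$.

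Next I would handle the clean times by a localized UCB-type argument. On a clean window at time $t$ the rewards in the window are i.i.d.\ with a single mean $\mu_j(t)$ per arm, so the standard decomposition applies: if a suboptimal arm $j$ is selected, then at least one of
\[
\bar r_j(t-1,\alpha)\ge \mu_j(t-1)+c_j(t-1,\alpha),\quad \bar r_{j^*}(t-1,\alpha)\le \mu_{j^*}(t-1)-c_{j^*}(t-1,\alpha),
\]
or $n_j(t-1,\alpha)\le 4(1+\alpha)\ln(t-1)/\Delta_j^2$ must hold. The first two are controlled by a Hoeffding bound together with a peeling union bound over the possible values $n_j(t-1,\alpha)\in\{1,\dots,\tau(t-1,\alpha)\}$, which yields a per-step probability of order $\tau(t-1,\alpha)\,(t-1)^{-2(1+\alpha)}$; since $2(1+\alpha)-\alpha>1$ for all $\alpha\in(0,1]$, the sum over $t\le T$ is at most a constant contribution to the regret.

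The main technical obstacle is to control the total number of clean-time plays of a suboptimal arm $j$ for which $n_j(t-1,\alpha)\le 4(1+\alpha)\ln(t-1)/\Delta_j^2$. Since $\tau(t,\alpha)$ is non-decreasing and bounded above by $\tau(T,\alpha)$, I would cover $\until{T}$ by a collection of $\lceil T/\tau(T,\alpha)\rceil$ disjoint intervals each of length $\tau(T,\alpha)$. Inside any such interval, the sliding-window counter $n_j(\cdot,\alpha)$ can exceed $4(1+\alpha)\ln T/\Delta_j^2$ only finitely many times consistent with the hypothesis, so the number of clean-time plays of $j$ with the third event triggered is at most $4(1+\alpha)\ln T/\Delta_j^2$ per interval. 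Summed over intervals and arms, this contributes a regret of order
\[
\frac{T}{\tau(T,\alpha)}\cdot(1+\alpha)\ln T\cdot\!\!\sum_{j\ne j^*}\frac{\Delta_j}{\Delta_j^2}\in O\bigl(T^{1-\alpha}\ln T\bigr).
\]

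Finally, I would combine the three contributions to obtain $R^{\text{SW-UCB\#}}(T)\in O(T^{\nu+\alpha}+T^{1-\alpha}\ln T)$ and observe that the choice $\alpha=(1-\nu)/2$ equates the two exponents, giving the claimed bound $O(T^{(1+\nu)/2}\ln T)$. The delicate parts that will demand the most care are (i) the peeling step in the Hoeffding bound to accommodate the sliding count $n_j(t-1,\alpha)$, and (ii) verifying that the interval-covering argument still yields the right per-window cap on event (c) despite $\tau(t,\alpha)$ growing with $t$; both can be argued by monotonicity of $\tau(t,\alpha)$ and by absorbing lower-order terms into the $O(\cdot)$ bound.
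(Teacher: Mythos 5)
Your overall architecture matches the paper's: split times into those whose window contains a breakpoint (contaminated) and the rest, bound the contaminated contribution by $\Upsilon_T\lceil\lambda(T-1)^\alpha\rceil\Delta_{\max}$, decompose clean-time suboptimal plays into a ``deficient window-count'' event and a concentration event handled by Chernoff--Hoeffding with a union bound over the possible values of the sliding counts, and finally balance $T^{\nu+\alpha}$ against $T^{1-\alpha}\ln T$ with $\alpha=(1-\nu)/2$. The gap is in the step you yourself flag as the main obstacle: the claim that covering $\until{T}$ by $\lceil T/\tau(T,\alpha)\rceil$ intervals of \emph{uniform} length $\tau(T,\alpha)$ gives a cap of $4(1+\alpha)\ln T/\Delta_j^2$ deficient-count plays per interval. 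This is not justified, and monotonicity of $\tau(t,\alpha)$ works \emph{against} you, not for you: inside an early interval, say $t\le\lceil\lambda T^\alpha\rceil$, the window width is only about $\lambda t^\alpha\ll\lambda T^\alpha$, so plays of arm $j$ made earlier in the same interval slide out of the window, $n_j(t-1,\alpha)$ can drop back below the threshold repeatedly, and the number of triggering plays in that interval can exceed your cap by a factor of order $(T/t)^\alpha$. The per-interval cap is only legitimate when the interval is no longer than the \emph{local} window width, which your uniform intervals violate everywhere except near $T$.

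The paper's key device is precisely the repair of this step: it partitions time non-uniformly into $G$ epochs with endpoints $\lfloor[\lambda(1-\alpha)g]^{\frac{1}{1-\alpha}}\rfloor$ (see \eqref{e}--\eqref{eq:partition}), so that each epoch's length matches the local window width; a convexity estimate for $x\mapsto x^{\frac{1}{1-\alpha}}$ (inequality \eqref{window bound 1}) shows that the window at any time in epoch $g$ reaches back to within one step of the epoch's start, which legitimizes a per-epoch cap of $A(t_j(g)-1)+2$, and since $G\in O(T^{1-\alpha})$ the total is still $O(T^{1-\alpha}\ln T)$ and the theorem follows. Your uniform covering could in principle be salvaged by counting roughly $(T/t_m)^\alpha$ window-sized ``waves'' per interval, each capped by $A(T)$, and summing over intervals (the sum does come out to $O(T^{1-\alpha}\ln T)$), but that is a different and more delicate computation than the one you wrote, and the justification you give (``monotonicity of $\tau$'') does not supply it. The remaining ingredients of your proposal (contamination bound, the three-event decomposition, the peeling union bound being summable because $2(1+\alpha)-2\alpha>1$, and the exponent balancing) are sound and coincide with the paper.
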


\begin{proof}
We define $\mathcal{T}$ such that for all $t\in\mathcal{T}$, $t$ is either a breakpoint or there exists a breakpoint in its sliding-window of observations $\{t-\tau(t-1,\alpha),\ldots,t-1\}$. For $t\in\mathcal{T}$, the estimate of the mean rewards may be significantly biased. It can be shown that
$$|\mathcal{T}| \leq \Upsilon_T \lceil \lambda (T-1)^\alpha \rceil,$$
and consequently, the regret can be upper bounded as follows:
\begin{equation}
R(T)\leq \sum_{j=1}^{N}\mathbb{E}[\tilde{N}_j(T)] \Delta_j+\Upsilon_T [\lambda (T-1)^\alpha+1]\Delta_{\max},
\label{f}
\end{equation}
where $\tilde{N}_j(T) := \sum_{t=1}^T \bs{1}_{\{ j_t=j \neq j_t^*, \, t\notin \mathcal{T}\}}$ satisfies
\begin{equation}
 \begin{split}
\tilde{N}_j(T) 
&\leq 1+\sum_{t=N+1}^T \bs{1}_{\{ j_t=j \neq j_t^*,n_j(t-1,\alpha)<A(t-1) \}} \\
& \quad +\sum_{t=N+1}^T \bs{1}_{\{ j_t=j \neq j_t^*, \, t\notin \mathcal{T},\, n_j(t-1,\alpha) \geq A(t-1) \}},
 \end{split}
 \label{a}
\end{equation}
where $A(t)=\frac{4 (1+\alpha) \ln t }{\Delta_{\min}^2}$.

We first bound the second term on the right hand side of inequality (\ref{a}). Let $G \in \natural$ be such that
\begin{equation}
[\lambda(1-\alpha)(G-1)]^{\frac{1}{1-\alpha}}< T \leq [\lambda(1-\alpha) G]^{\frac{1}{1-\alpha}}.
\label{e}
\end{equation}
Then, consider the following partition of time indices
\begin{equation}\label{eq:partition}
\big\{\{1+\lfloor [\lambda(1-\alpha)(g-1)]^{\frac{1}{1-\alpha}} \rfloor,\ldots,\lfloor [\lambda(1-\alpha) g]^{\frac{1}{1-\alpha}}\rfloor \} \big\}_{g \in \until{G}}. 
\end{equation}
In the $g$-th epoch in the partition, either
\[
\sum_{t \in g \text{-th epoch}} \bs{1}_{\{ j_t=j \neq j^*,n_j(t-1,\alpha)<A(t-1) \}}=0,
\]
or there exist at least one time-instant $t$ that $ j_t=j \neq j^*(t)$ and $n_j(t-1,\alpha)<A(t-1)$. Let the last time-instant  satisfying these conditions in the $g$-th epoch be
\begin{multline*}
t_j(g) = \max \{t \in g \text{-th epoch}|j_t=j \neq j_t^*\, \\
\text{and}\, n_j(t-1,\alpha)<A(t-1)\}.
\end{multline*}
We will now show that all but one of the time indices in the $g$-th epoch until $t_i(g)-1$ are ensured to be contained in the time-window at $t_i(g)$. Towards this end, consider the increasing convex function $f(x)=x^{\frac{1}{1-\alpha}}$ with $\alpha \in (0,1)$. It follows that $f(x_2)-f(x_1) \leq f'(x_2)(x_2-x_1)$ if $x_2 \geq x_1$. Let $\tilde{t}$  be a time index in the $g$-th epoch, and set 
$x_1= g-1$ and $x_2 = \frac{\tilde{t}^{1-\alpha}}{\lambda(1-\alpha)}$. Then, substituting $x_1$ and $x_2$ in the above inequality and simplifying, we get 
\begin{equation}\label{ineq1}
\tilde{t} - (\lambda(1-\alpha)(g-1))^{\frac{1}{1-\alpha}}\leq \lambda \tilde{t}^\alpha \Big(\frac{\tilde{t}^{1-\alpha}}{\lambda(1-\alpha)}-g+1\Big).
\end{equation}
Since by definition of the $g$-th epoch, $\frac{\tilde{t}^{1-\alpha}}{\lambda(1-\alpha)} \leq g$, we have
\begin{align}
\tilde{t}-\lfloor (\lambda (1-\alpha) (g-1))^{\frac{1}{1-\alpha}}\rfloor
&\leq \min\{ \tilde t +1, \lambda \lceil \tilde{t}^{\alpha}\rceil+1 \} \nonumber \\
& =  \tau(\tilde{t},\alpha)+1. \label{window bound 1}
\end{align}
Setting $\tilde{t} = t_j(g)-1$ in (\ref{window bound 1}), we obtain
\[
t_j(g) -\tau \big(t_j(g)-1,\alpha\big) \leq 2+\lfloor\lambda(1-\alpha)(g-1)^{\frac{1}{1-\alpha}}\rfloor,
\]
i.e., the first time-instant in the sliding-window at $ t_j(g)$ is located at or to the left of the second time-instant of the $g$-th epoch in the partition~\eqref{eq:partition}. Therefore, it follows that
\begin{align*}
&\sum_{t=1+\lfloor\lambda(1-\alpha) (g-1)^{\frac{1}{1-\alpha}}\rfloor}^{\lfloor \lambda(1-\alpha) g^{\frac{1}{1-\alpha}} \rfloor}\bs{1}_{\{ j_t=j \neq j^*,n_j(t-1,\alpha)<A(t-1) \}}\\
&\qquad \leq n_j(t_j(g)-1,\alpha)+2 \leq A(t_j(g)-1)+2.
\end{align*}
Now we have
\begin{align}\label{bound1}
&\sum_{t=N+1}^T \bs{1}_{\{ j_t=j \neq j^*,n_j(t-1,\alpha)<A(t-1) \}} \nonumber\\
&\leq 2 G+\sum_{g=1}^{G} A(t_j(g)-1) \leq G\Big(2+\frac{4 (1+\alpha) \ln T }{\Delta_{\min}^2}\Big).
\end{align}

Next, we upper-bound expectation of the last term in (\ref{a}). It can be shown that
\begin{align*}
&\bs{1}_{\{ j_t=j \neq j_t^*, \, t\notin \mathcal{T},\, n_j(t-1,\alpha) \geq A(t-1) \}}\\
&\leq \sum_{s_{j_t^*}=1}^{ \lceil \lambda (t-1)^ \alpha \rceil}\,\sum_{s_j=A(t-1)}^{\lceil \lambda (t-1)^ \alpha \rceil}\bs{1}_{\{n_j(t-1,\alpha)=s_j, \, n_{j_t^*}(t-1,\alpha)=s_{j_t^*} \}}\\ 
& \times \bs 1_{\{\overline{r}_{j}(t-1,\alpha) +c_j(t-1,\alpha) \geq \overline{r}_{j_t^*}(t-1,\alpha) +c_{j_t^*}(t-1,\alpha), \,t\notin \mathcal{T}\} }.
\end{align*}
When $t\notin \mathcal{T}$, for each $j \in \until{N}$, $\mu_j(s)$ is a constant for all $s \in \{t-\tau(t-1,\alpha),\ldots,t\}$. Note that $\overline{r}_{j}(t-1,\alpha) +c_j(t-1,\alpha)>\overline{r}_{j_t^*}(t-1,\alpha) +c_{j_t^*}(t-1,\alpha)$ means at least one of the following holds. 
\begin{align}
&\overline{r}_{j}(t-1,\alpha) \geq  \mu_j(t) + c_j(t-1,\alpha), \label{b}\\
&\overline{r}_{j_t^*}(t-1,\alpha) \leq  \mu_{j_t^*}(t) - c_{j_t^*}(t-1,\alpha), \label{c}\\
&\mu_{j_t^*} (t)- \mu_j(t)< 2 c_j(t-1,\alpha). \label{d}
\end{align}
Since $n_j(t-1,\alpha) \geq A(t-1)$, (\ref{d}) does not hold. Applying Chernoff-Hoeffding inequality~\cite[Theorem 1]{WH:63} to bound the probability of events (\ref{b}) and (\ref{c}), we obtain
\begin{align*}
&\mathbb{P}( \overline{r}_{j}(t-1,\alpha) \geq  \mu_j(t) + c_j(t-1,\alpha) )\leq (t-1)^{-2(1+\alpha)}\\
&\mathbb{P}( \overline{r}_{j^*_t}(t-1,\alpha) \leq  \mu_{j^*_t}(t) -  c_{j_t^*}(t-1,\alpha) )\leq (t-1)^{-2(1+\alpha)}.
\end{align*}
Now, we have
\begin{align}
&\mathbb{E} \Big[ \sum_{t=N+1}^T \bs{1}_{\{ j_t=j \neq j^*, \, t\notin \mathcal{T},\, n_j(t-1,\alpha) \geq A(t-1) \}} \Big]  \nonumber \\
&\leq \sum_{t=N+1}^{T}2(t-1)^{-2(1+\alpha)} [\lambda (t-1)^ \alpha+1]^2  \nonumber \\
&\leq \sum_{t=1}^{\infty}2(\lambda+1)^2 t^{-2}=\frac{(\lambda+1)^2\pi^2}{3}.
\label{i}
\end{align}
Therefore, it follows from (\ref{f}), (\ref{a}), (\ref{bound1}), and (\ref{i}) that
\begin{align*}
R(T)\leq &\sum_{j=1}^{N} \Big(G \big(2+\frac{4 (1+\alpha
) \ln T }{\Delta_{\min}^2})+1+\frac{(\lambda+1)^2\pi^2}{3}\big) \Delta_j\\
&+\Upsilon_T \lceil \lambda (T-1)^\alpha \rceil \Big)\Delta_{\max}.
\end{align*}
From (\ref{e}), we have $G=O(T^{1-\alpha})$, and this yields $R^{\text{SW-UCB\#}}(T) \in O(T^{\frac{1+\nu}{2}}\ln T)$.
\end{proof}

\subsection{The SW-UCB\# in the slowly-varying environment}

\begin{theorem}[\bit{SW-UCB\# in slowly-varying environment}]\label{thm:sw-ucb-slow}
For the slowly-varying environment with the non-stationarity  parameter $\epsilon_T =O(T^{-\kappa})$ with $\kappa \in \real_{>0}$ and the SW-UCB algorithm, the expected cumulative regret 
\[
\supscr{R}{SW-UCB\#}(T)\in O(T^{1-{\frac{\alpha}{3}}}\ln T),
\]
where $\alpha = \min{\{ 1,\frac{3\kappa}{4}\}}$.
\end{theorem}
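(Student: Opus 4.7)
The plan is to adapt the proof of Theorem~\ref{thm:sw-ucb-abrupt} to the slowly-varying setting. The main conceptual change is that there are no breakpoints to excise; instead, within the sliding-window at time $t$ the mean $\mu_j(s)$ drifts by at most $\tau(t-1,\alpha)\epsilon_T$ at each arm. Accordingly, I introduce the window-averaged mean
\[
M_j(t) := \frac{1}{\tau(t-1,\alpha)} \sum_{s = t - \tau(t-1,\alpha)}^{t-1} \mu_j(s),
\]
so that the Chernoff--Hoeffding inequality applied to $\overline{r}_j(t-1,\alpha)$ estimates $M_j(t)$ (not $\mu_j(t)$) with bias at most $\tau(t-1,\alpha)\epsilon_T$. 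This plays the role of the indicator $\bs{1}_{\mathcal{T}}$ from the abruptly-changing proof: instead of discarding time indices contaminated by a breakpoint, I absorb the within-window drift as a deterministic additive bias.

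As in the slowly-varying LM-DSEE analysis of Theorem~\ref{thm:LM-DSEE-slow}, I partition the arms at each time into a near-optimal set $J(t) = \setdef{j \in \until{N}}{\mu_{j^*_t}(t) - \mu_j(t) \le \sigma}$ and its complement, where $\sigma$ is a threshold to be chosen. The instantaneous regret incurred while playing any $j \in J(t)$ is at most $\sigma$, contributing at most $T\sigma$ to the cumulative regret. For $j \notin J(t)$ I mimic the abruptly-changing argument: introduce $A(t) = \frac{4(1+\alpha)\ln t}{(\sigma - 2\tau(t,\alpha)\epsilon_T)^2}$, the analogue of $A(t)$ in the proof of Theorem~\ref{thm:sw-ucb-abrupt}, chosen so that $n_j(t-1,\alpha) \ge A(t-1)$ forces $2c_j(t-1,\alpha)$ to fall below $\sigma - 2\tau(t-1,\alpha)\epsilon_T$. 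Decompose
\begin{align*}
\bs{1}_{\{j_t = j \ne j^*_t,\, j \notin J(t)\}} &\le \bs{1}_{\{n_j(t-1,\alpha) < A(t-1)\}} \\
&\quad + \bs{1}_{\{j_t = j,\, j\notin J(t),\, n_j(t-1,\alpha)\ge A(t-1)\}}.
\end{align*}
The first indicator is controlled by the convex-function epoch partition \eqref{eq:partition} together with \eqref{window bound 1} used in the proof of Theorem~\ref{thm:sw-ucb-abrupt}, contributing at most $G(2 + A(T))$ with $G = O(T^{1-\alpha})$. For the second indicator, as in \eqref{b}--\eqref{d}, at least one of the windowed estimates deviates from its time-averaged target by $c_j$, or the gap $M_{j^*_t}(t) - M_j(t) \ge 2c_j(t-1,\alpha)$ fails; the latter is precluded when $n_j(t-1,\alpha)\ge A(t-1)$ and the instantaneous gap exceeds $\sigma$ (using $|M_j - \mu_j|\le \tau\epsilon_T$ on both arms). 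Chernoff--Hoeffding then bounds the remaining probabilities by $(t-1)^{-2(1+\alpha)}$, summable over $t$ as in~\eqref{i}.

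Combining the two contributions, the regret is $O\bigl(T\sigma + T^{1-\alpha}\ln T/\sigma^2\bigr)$, subject to the consistency constraint $\sigma \ge 2\tau(T-1,\alpha)\epsilon_T = O(T^{\alpha-\kappa})$. Balancing $T\sigma$ against $T^{1-\alpha}\ln T /\sigma^2$ yields the interior optimum $\sigma = \Theta(T^{-\alpha/3})$, and the feasibility condition $T^{-\alpha/3} \ge T^{\alpha-\kappa}$ forces $\alpha \le 3\kappa/4$; the cap $\alpha \le 1$ comes from $\tau(t,\alpha) \le t$. Thus the choice $\alpha = \min\{1, 3\kappa/4\}$ stated in Algorithm~\ref{algo:sw-ucb} delivers the claimed $O(T^{1-\alpha/3}\ln T)$ bound.

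The main obstacle I anticipate is the bookkeeping around the surrogate mean $M_j(t)$: one must carefully track the additional $\tau(t,\alpha)\epsilon_T$ slack that propagates into the gap condition~\eqref{d}, verify that the re-tuned $A(t)$ still grows only logarithmically in $t$ for the chosen $\sigma$, and ensure that neither the partition bound nor the Chernoff tail is inflated by the drift. This combines the near-optimal threshold $\sigma$ and window-averaged mean from the LM-DSEE slowly-varying proof with the sliding-window partition argument from the SW-UCB\# abruptly-changing proof, and fusing them without losing logarithmic factors is the delicate step.
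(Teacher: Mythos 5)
Your proposal follows essentially the same route as the paper's proof: the same decomposition into an under-sampled term (handled via the epoch partition and window-coverage bound carried over from the abruptly-changing analysis, giving $G\cdot A(T)$ with $G=O(T^{1-\alpha})$), a well-sampled term with gap above $\sigma$ (handled by Chernoff--Hoeffding around window-averaged means, with the drift slack $\tau\epsilon_T$ precluding the analogue of the gap event), and a near-optimal set $J(t)$ contributing $O(\sum_t\sigma)$; your balancing argument with $\sigma=\Theta(T^{-\alpha/3})$ and $A(t)\propto\ln t/(\sigma-2\tau\epsilon_T)^2$ reproduces exactly the paper's choices $\sigma=t^{-\alpha/3}+2\lceil\lambda(t-1)^\alpha\rceil\epsilon_T$ and $A(t)=4t^{2\alpha/3}(1+\alpha)\ln t$, as well as the feasibility condition yielding $\alpha=\min\{1,3\kappa/4\}$. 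The only cosmetic difference is that the paper centers the concentration bound at the selection-weighted window mean $M_j(t)=\frac{1}{n_j(t,\alpha)}\sum_s\mu_j(s)\bs{1}_{\{j_s=j\}}$ (the actual expectation of $\overline{r}_j(t,\alpha)$) rather than your unweighted window average, but since both lie within $\tau(t,\alpha)\epsilon_T$ of $\mu_j(t)$, the same slack absorbs the difference and the order of the bound is unaffected.
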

\begin{proof}
We start by noting that the number of times arm $j$ is selected when it is suboptimal satisfies
\begin{equation}
\begin{split}
\hat{n}_j(T) 
& \le  1+\sum_{t=N+1}^T \bs{1}_{\{ j_t=j \neq j_t^*,n_j(t-1,\alpha)<A(t-1) \}} \\
& \quad +\sum_{t=N+1}^T \bs{1}_{\{ j_t=j \neq j_t^*,n_j(t-1,\alpha) \geq A(t-1),  j \notin J(t) \}}\\
& \quad +\sum_{t=N+1}^T \bs{1}_{\{ j_t=j \neq j_t^*,n_j(t-1,\alpha) \geq A(t-1), j \in J(t) \},}
\label{g}
 \end{split}
\end{equation}
 where $\sigma=t^{-\frac{\alpha}{3}}+2\lceil \lambda (t-1)^\alpha \rceil \epsilon_T$, $A(t)=4 t^{\frac{2\alpha}{3}} (1+\alpha) \ln t$, $n_j(t-1, \alpha)$ is defined in Algorithm~\ref{algo:sw-ucb}, and $J(t) =\setdef{j \in \until{N}}{\mu_{j^*_t}(t) - \mu_j(t) \le \sigma}$. 
 
 We first focus on the second term on the right hand side of~\eqref{g}, and bound it similarly to the proof of Theorem~\ref{thm:sw-ucb-abrupt}:
\begin{align}
&\sum_{t=N+1}^T \bs{1}_{\{ j_t=j \neq j^*_t,n_j(t-1,\alpha)<A(t-1) \}} \!\leq\! \sum_{g=1}^{G} [A(t_j(g)-1)+2] 
&\nonumber \\
&\leq 2G +4 (1+\alpha) \ln T \sum_{g=1}^{G} [\lambda(1-\alpha) g]^{\frac{2\alpha}{3-3\alpha}} \nonumber\\
&\leq \frac{12(1+\alpha)}{3-\alpha}\lambda^{\frac{2\alpha}{3-3\alpha}} [(1-\alpha)(G+1)]^{\frac{3-\alpha}{3-3\alpha}}\ln T + 2G, \label{eq:second-term}
\end{align}
where $G$ is defined in~\eqref{e}. 

We now analyze the third term in~\eqref{g}. Let $M_j(t)=\frac{1}{n_j(t,\alpha)} \sum_{s= t- \lceil \lambda t^\alpha \rceil+1}^t \mu_j(s) \bs{1}_{\{ j_s=j \}}$. Then, it follows similarly to the proof of Theorem~\ref{thm:sw-ucb-abrupt} that 
\begin{align*}
&\bs{1}_{\{ j_t=j \neq j^*_t, \, j \notin J(t),\, n_j(t-1,\alpha) \geq A(t-1) \}}\\
&\leq \sum_{s_{j_t^*}=1}^{ \lceil \lambda (t-1)^ \alpha \rceil}\,\sum_{s_j=A(t-1)}^{\lceil \lambda (t-1)^ \alpha \rceil}\bs{1}_{\{n_j(t-1,\alpha)=s_j, \, n_{j^*_t}(t-1,\alpha)=s_{j_t^*}\}}\\ 
& \times \bs 1_{\{\overline{r}_{j}(t-1,\alpha) +c_j(t-1,\alpha)>\overline{r}_{j^*_t}(t-1,\alpha) +c_{j^*_t}(t-1,\alpha), \, j \notin J(t)\} }. 
\end{align*}
Event $\overline{r}_{j}(t-1,\alpha) +c_j(t-1,\alpha)>\overline{r}_{j^*_t}(t-1,\alpha) +c_{j^*_t}(t-1,\alpha)$ is true if at least one of the following events is true. 
\begin{align}
&\overline{r}_{j}(t-1,\alpha) \geq  M_j(t-1) + c_j(t-1,\alpha) \label{h1}\\
&\overline{r}_{j^*_t}(t-1,\alpha) \leq  M_{j^*_t}(t-1) - c_{j^*_t}(t-1,\alpha) \label{h}\\
&M_{j^*_t}(t-1) - M_j(t-1)< 2 c_j(t-1,\alpha).\label{h2}
\end{align}
Since the change in the mean reward for any arm within time-window $\{t-\tau(t\!-\!1, \alpha), \ldots, t\}$ is less than $\lceil \lambda (t-1)^\alpha \rceil \epsilon_T$ and $\mu_{j^*_t}(t)-\mu_j(t) >\sigma$, 
we have $M_{j^*_t}(t\!-\!1) - M_j(t\!-\!1)>t^{-\frac{\alpha}{3}}$. Furthermore,~\eqref{h2} does not hold if $n_j(t\!-\!1,\alpha) \geq A(t\!-\!1)$. 

Now applying the Chernoff-Hoeffding inequality~\cite[Theorem 1]{WH:63} to~(\ref{h1}-\ref{h}), we obtain
\begin{align*}
&\mathbb{P}( \overline{r}_{j}(t-1,\tau) \geq  M_j(t) + c_j(t-1,\tau) )\leq (t-1)^{-2(1+\alpha)}\\
&\mathbb{P}( \overline{r}_{j^*_t}(t-1,\tau) \leq  M_{j_t^*}(t) -  c_{j_t^*}(t-1,\tau) )\leq (t-1)^{-2(1+\alpha)}.
\end{align*}
It follows similarly to proof of Theorem~\ref{thm:sw-ucb-abrupt} that the expected value of the third term in~\eqref{g} satisfies
\begin{align}
\mathbb{E} \left[\sum_{t=N+1}^T \!\!\!\! \bs{1}_{\{ j_t=j \neq j^*_t,n_j(t-1,\alpha) \geq A(t-1), j \notin J(t) \}}\right] 
\leq \frac{(\lambda+1)^2\pi^2}{3}. \label{eq:third-term}
\end{align}

Therefore, from~\eqref{g},~\eqref{eq:second-term}, and~\eqref{eq:third-term}, we have
\begin{align*}
R(T) &\leq \sum_{j=1}^{N}\Delta_j \Big\{\frac{12(1+\alpha)}{3-\alpha}\lambda^{\frac{2\alpha}{3-\alpha}}[(1-\alpha)(G+1)]^{\frac{3-\alpha}{3-3\alpha}}\ln T\\
&+2G +\frac{(\lambda+1)^2\pi^2}{3}+1\Big\}+\sum_{t=1}^{T}\sigma \\
&\leq \sum_{j=1}^{N}\Delta_j \{ \frac{12(1+\alpha)}{3-\alpha}\lambda^{\frac{2\alpha}{3-\alpha}}[(1-\alpha)(G+1)]^{\frac{3-\alpha}{3-3\alpha}}\ln T\\
&+2G+\frac{(\lambda+1)^2\pi^2}{3}+1\Big\}+[2+\frac{2\lambda}{\alpha+1}T^{1+\alpha}]\epsilon_T\\
&+\frac{3}{3-\alpha}(T+1)^{\frac{3-\alpha}{3}}.
\end{align*}
Since $G\in O(T^{1-\alpha})$, we have $R^{\text{SW-UCB\#}}(T) \in O(T^{1-\frac{\alpha}{3}}\ln T)$.
\end{proof}


\section{Numerical Illustration}\label{sec:simulation}
In this section, we present simulation results for the SW-UCB\# and LM-DSEE algorithms in both abruptly-changing and slowly-varying environments.  For all the simulations, we consider a 10-armed bandit in which the reward at each arm is generated using Beta distribution. For the abruptly-changing environment, the breakpoints are introduced at time-instants where the next element of the sequence $\seqdef{\lfloor t^{\nu}\rfloor}{t \in \until{T}}$ is different from the current element. At each breakpoint, the mean rewards at each arm are randomly selected from the set $\{0.05, 0.12, 0.19, 0.26, 0.33, 0.39, 0.46, 0.53, 0.6, 0.9\}$. In the slowly-varying environment, the change in the mean reward at an arm is uniformly randomly sampled from the set $[-2T^{-{\kappa}}, 2T^{-{\kappa}}]$. For Algorithm~\ref{algo:lm-dsee}, we select $(a,b)$  equal to $(1,0.25)$ and $(20,1)$
 for abruptly-changing and slowly-varying environments, respectively. For Algorithm \ref{algo:sw-ucb}, we select $\lambda=12.3$ and $\lambda=4.3$ for abruptly-changing and slowly-varying environments, respectively.The parameters $\nu$ and $\kappa$ that describe characteristics of non-stationarity are varied to evaluate the performance of algorithms. Figs.~\ref{fig:dsee} and~\ref{fig:ucb} show that both SW-UCB\# and LM-DSEE are effective in non-stationary environments.

\begin{figure}
\centering
\includegraphics[width=0.47\linewidth, height=0.4\linewidth, keepaspectratio]{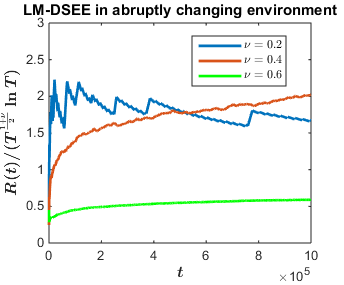}
\includegraphics[width=0.5\linewidth, height=0.4\linewidth, keepaspectratio]{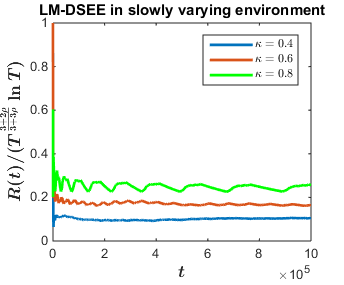} \\
\caption{The performance of the LM-DSEE algorithm in abruptly-changing and slowly-varying environments. \label{fig:dsee}}
\end{figure}

\begin{figure}
\centering
\includegraphics[width=0.5\linewidth, height=0.4\linewidth, keepaspectratio]{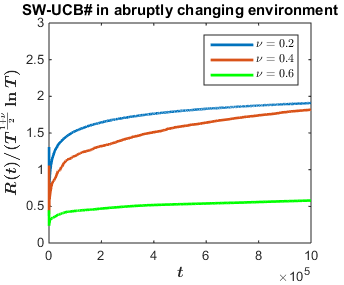}
\includegraphics[width=0.485\linewidth, height=0.4\linewidth, keepaspectratio]{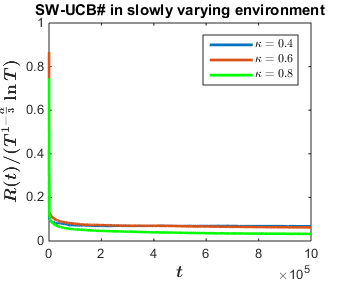}\\
\caption{The performance of the SW-UCB\# algorithm in abruptly-changing and slowly-varying environments. \label{fig:ucb}}
\end{figure}

In can be seen in Figs.~\ref{fig:dsee} and~\ref{fig:ucb} that for both algorithms in either of the environments, as expected, the ratio of the empirical regret to the order of the regret established in Sections~\ref{sec:LM-DSEE} and \ref{sec:SW-UCB} is upper bounded by a constant. The regret for the SW-UCB\# is relatively smoother than the regret for the LM-DSEE algorithm. The saw-tooth behavior of the regret for LM-DSEE is attributed to the fixed exploration-exploitation structure, wherein the regret certainly increases during the exploration epochs. 



While both the algorithms incur the same order of regret, compared with LM-DSEE, SW-UCB\# has a better leading constant. This illustrates the cost of constraining the algorithm to have a deterministic structure. On the other hand, this deterministic structure can be very useful, for example, in the context of planning trajectories for a mobile robot performing search using an MAB framework. 


\section{Conclusions and Future Directions}\label{sec:conclusions}
We studied the stochastic MAB problem in two classes of non-stationary environments and designed two novel algorithms, LM-DSEE and SW-UCB\# for these problems. We analyzed these algorithms for abruptly-changing  and slowly-varying environments, and characterized their performance in terms of expected cumulative regret. In particular, we showed that these algorithms incur sublinear expected cumulative regret, i.e., the time average of the regret asymptotically converges to zero. 

There are several possible avenues for future research. In this paper, we focused on a single decision-maker. Extensions of this work to multiple decision-makers is of significant interest. Implementation of these algorithms for robotic search and surveillance is an exciting direction to pursue. Finally, extension of the methodology developed in this paper to other classes on MAB problems such as the Markovian MAB problem and the restless MAB problem is also of interest.

\bibliographystyle{IEEEtran}
\bibliography{IEEEabrv,bandits,surveillance,mybib}

\end{document}